\documentclass[twoside,reqno,twocolumn]{article}

\usepackage[letterpaper]{geometry}

\usepackage{ltexpprt}
\usepackage{hyperref}
\usepackage{adjustbox}
\usepackage{amsmath}
\usepackage{lipsum}
\usepackage{float}
\usepackage{placeins}
\usepackage{microtype}
\usepackage[noend]{algpseudocode}
\usepackage{subfig, float, xspace}
\usepackage{graphicx}
\usepackage{balance}
\usepackage[utf8]{inputenc}
\usepackage{algorithm}
\setlength {\marginparwidth }{2cm}
\DeclareMathOperator{\cT}{\mathcal{T}}
\usepackage[colorinlistoftodos,textsize=tiny]{todonotes}
\usepackage{dsfont}
\newcommand\numberthis{\addtocounter{equation}{1}\tag{\theequation}}

\usepackage{times}
\usepackage[T1]{fontenc}
\widowpenalty10000
\clubpenalty10000
\begin{document}
\newcommand\relatedversion{}
\renewcommand\relatedversion{\thanks{The full version of the paper can be accessed at \protect\url{https://arxiv.org/abs/1902.09310}}} 

\title{\Large CPTAM: Constituency Parse Tree Aggregation Method}
\author{Adithya Kulkarni\thanks{The first two authors contributed equally to this work.}\qquad Nasim Sabetpour$^*$\qquad Alexey Markin\qquad Oliver Eulenstein\qquad Qi Li \\ {\{aditkulk, nasim, amarkin, oeulenst, qli\}@iastate.edu}
\\Iowa State University, Ames, Iowa, USA} 
\maketitle
\def\algbackskip{\hskip-\ALG@thistlm}

\fancyfoot[R]{\scriptsize{Copyright \textcopyright\ 2022 by SIAM\\
Unauthorized reproduction of this article is prohibited}}

\begin{abstract} Diverse Natural Language Processing tasks employ constituency parsing to understand the syntactic structure of a sentence according to a phrase structure grammar. Many state-of-the-art constituency parsers are proposed, but they may provide different results for the same sentences, especially for corpora outside their training domains. This paper adopts the truth discovery idea to aggregate constituency parse trees from different parsers by estimating their reliability in the absence of ground truth. Our goal is to consistently obtain high-quality aggregated constituency parse trees. We formulate the constituency parse tree aggregation problem in two steps, structure aggregation and constituent label aggregation. Specifically, we propose the first truth discovery solution for tree structures by minimizing the weighted sum of Robinson-Foulds ($RF$) distances, a classic symmetric distance metric between two trees. Extensive experiments are conducted on benchmark datasets in different languages and domains. The experimental results show that our method, CPTAM, outperforms the state-of-the-art aggregation baselines. We also demonstrate that the weights estimated by CPTAM can adequately evaluate constituency parsers in the absence of ground truth. \end{abstract}

\textit{Keywords-} Constituency parse tree, Truth discovery, Optimization

\section{Introduction}
The constituency parse trees (CPTs) display the syntactic structure of a sentence using context-free grammar. CPTs divide the input sentence into phrase structures that belong to a specific grammar category. The available state-of-the-art constituency parsers use different parsing techniques. They are leveraged in various NLP applications like Question Answering, Information Extraction, and word-processing systems. However, due to multiple limitations, the state-of-the-art constituency parsers may make errors, and different constituency parsers may give different results for the same sentence.

\begin{table*} [ht]  \small
\centering 
\begin{adjustbox}{width=160mm,center}
\begin{tabular}{l|c|c|c|c|c|c}
\hline
& \multicolumn{1}{|p{2cm}|}{\centering Penn \\ Treebank-3}
& \multicolumn{1}{|p{2cm}|}{\centering OntoNotes \\ (English)}
& \multicolumn{1}{|p{2cm}|}{\centering OntoNotes \\ (Chinese)}
& \multicolumn{1}{|p{2cm}|}{\centering French \\ Treebank}
& \multicolumn{1}{|p{2cm}|}{\centering TIGER \\ Corpus}
& \multicolumn{1}{|p{2cm}}{\centering Genia}\\

\hline 
Agreement of all parsers & 1.32 & 6.65 & 23.45 & 0.91 & 3.23 & 0.21
\\
Agreement of two parsers & 45.33 & 55.14 & 31.73 & 7.69 & 2.94 & 16.25
\\
No agreement & 53.35 & 38.21 & 44.82 & 91.40 & 93.83 & 83.54
\\
\hline 
\end{tabular}%
\end{adjustbox}
\caption{Percentage of the sentences that different parsers agree on the tree structure}
\label{table: Intro}
\end{table*}
The conflicts among parsers can confuse users on the parser to use for the downstream tasks, as the performance of different parsers can vary significantly on different domains and languages. No parser can consistently achieve the best results on all datasets, and it is costly and impractical for users to obtain ground truth parsing results. Table \ref{table: Intro} shows the percentage of agreement among the structure of the parsers' outputs on six benchmark datasets, including Penn Treebank-3 \cite{taylor2003penn}, OntoNotes (English and Chinese) \cite{pradhan2009ontonotes}, Genia \cite{ohta2002genia}, French Treebank \cite{abeille2019corpus}, and TIGER Corpus \cite{brants2004tiger}. We execute four parsers including Berkeley \cite{kitaev-klein-2018-constituency}, CoreNLP \cite{manning2014stanford}, AllenNLP \cite{gardner-etal-2018-allennlp}, and Hanlp \cite{hanlp2}, for the English datasets, and three parsers, namely Berkeley, CoreNLP, and Hanlp, for the non-English datasets. On the Penn Treebank-3 dataset, it can be observed that all the parsers agree only on 1.32\% of the sentences. A similar observation can be made for other datasets. 

To overcome these challenges, we aim to construct a CPT that performs consistently well to represent the constituency grammar of the sentence in the absence of ground truth. Intuitively, such CPTs can be constructed by aggregating the parsing results from the state-of-the-art parsers to keep the common structure among the parsers' output and resolve their conflicts. Therefore, we propose to aggregate CPTs through the truth discovery idea. 

Truth discovery has been proposed to conduct weighted aggregation for various applications, where the weights reflect the source reliabilities and are inferred from the data itself without the knowledge of the ground truth \cite{li2016survey}. Truth discovery algorithms \cite{CRH14,sabetpour2020optsla,sabetpour2021truth,yin2008truth} witness successful applications on the aggregation of categorical and numerical data types. However, the aggregation of the tree data type has never been investigated in this domain.  

There are tree aggregation methods proposed in the phylogenetic domain \cite{adams1972consensus,MARGUSH1981239,sokal1981taxonomic,goloboff2002semi} and ensemble methods for parsers \cite{surdeanu2010ensemble}. The issue with studies in the phylogenetic domain is that many assumptions are not applicable for CPTs, and none of them consider the constituent label aggregation. The ensemble methods use ground truth to evaluate the quality of the weak learners, whereas, for our task, the aggregation needs to be conducted in the absence of ground truth.

In this paper, we adopt the truth discovery framework to aggregate CPTs from different input parsers by estimating the parsers' reliability without ground truth. We formulate the constituency parse tree aggregation problem in two steps, structure aggregation and constituent label aggregation. In the structure aggregation step, the key challenges are measuring the distance between trees and constructing the aggregated tree that can minimize that distance. We adopt the Robinson-Foulds ($RF$) distance, a symmetric difference metric for trees \cite{robinson1981comparison}, to measure the distance between the aggregated tree and input CPTs. In practice, we propose an efficient algorithm that can construct the optimal aggregated tree in near-linear time and provide theoretical proofs. We adopt the same truth discovery framework in the constituent label aggregation step.

Extensive empirical studies demonstrate that the proposed Constituency Parse Tree Aggregation Model (CPTAM) can consistently obtain high-quality aggregated CPTs across different languages and domains. Specifically, we apply the most widely used constituency parsers as the input parsers on six corpora from English, Chinese, French and German languages, and from general domains and bio-medical domains. Our experimental results validate that there is no single parser that can achieve the best results across the corpora. CPTAM can consistently obtain high-quality results and significantly outperforms the aggregation baselines. We further examine the estimated weights for the parsers and illustrate that the weight estimation can correctly reflect the quality of each parser in the absence of ground truth. 
 
In summary, our main contributions are:
\begin{itemize}
    \item We identify the pitfalls and challenges in data with tree structures for the task of truth discovery.
    \item We adopt Robinson-Foulds ($RF$) distance to measure the differences among data with tree structures.
    \item We construct the best aggregation trees by solving an optimization problem and derive the theoretical proofs for the correctness and the efficiency of the algorithm.
    
    \item We test the proposed algorithm on real-world datasets,
and the results clearly demonstrate the advantages of the approach in finding the accurate tree structures from the multi-sourced input.
\end{itemize}

\section{Related Works}
We summarize the related works in three categories as below.
\subsection{Truth Discovery} 
Truth discovery aims to resolve the conflicts from multiple sources~\cite{li2016survey}. One line of work applies probabilistic methods to model the workers’ labeling behavior in crowdsourcing platforms~\cite{dong2014knowledge,ma2015faitcrowd,li2014confidence}. Another line of work formulates optimization frameworks that seek to minimize the weighted distance between the source and aggregated results and estimate the source reliability \cite{CRH14,yNeurIPS12}. Recent truth discovery methods consider different applications such as aggregation of sequential labels \cite{sabetpour2021truth,simpson-gurevych-2019-bayesian,nguyen2017aggregating} and aggregation of time series data \cite{yao2018online,li2015discovery, zhi2018dynamic}.

Most of the available truth discovery methods mainly focus on the numerical and categorical data \cite{li2016survey}, and none of them consider tree structure. Furthermore, the distance measurements introduced in previous works do not support the tree structure. However, the problem of how to aggregate information from trees into one representative tree has been of great importance for various applications \cite{ren2005survey}.

\subsection{Phylogenetic Tree Aggregation Problem} \label{related : phylo}

The tree aggregation problem has been studied in the \textit{phylogenetic} domain, where trees are branching diagrams showing the evolutionary relationships among biological species or other taxa \cite{felsenstein2004inferring}. The taxa can be described through different types of data (e.g., morphological or biomolecular). Since the inference of phylogenetic trees is an immensely complex problem, practitioners often perform many tree estimation runs with the same or different phylogenetic inference methods. The estimated trees are aggregated using \emph{consensus tree} techniques~\cite{bininda2002super,bryant2003classification}.

A variety of methods have been developed for phylogenetic tree aggregation \cite{adams1972consensus,bryant2003classification}. Some methods conduct aggregation through simple heuristics when the aggregated tree only contains branches with a certain percentage of agreement, such as the majority rule consensus \cite{MARGUSH1981239}, strict consensus \cite{bremer1990combinable}, semi-strict consensus \cite{goloboff2002semi}, and the greedy consensus \cite{bryant2003classification}. Further, \emph{supertree} and \emph{median tree} approaches have been extensively explored to compute fully binary aggregated trees \cite{bininda2004phylogenetic}. Such methods typically seek an output tree that minimizes the overall distance to the input trees. Since the mentioned methods are introduced in \textit{phylogenetic} domain, they do not consider the characteristics of parse trees.

\subsection {Ensemble Trees}
Tree ensemble methods such as Random Forest~\cite{probst2019hyperparameters} or Boosted trees~\cite{de2007boosted} are not suitable for our needs since these methods ensemble on the classification decisions instead of constructing an aggregation tree. 

There are multiple ensemble models for the parsing of syntactic dependencies in the literature, aiming to construct aggregation trees \cite{surdeanu2010ensemble,kuncoro-etal-2016-distilling}. These parsing tree ensemble methods are commonly categorized into two groups. The first group aggregates the base parsers at training time~\cite{nivre2008integrating,attardi2009reverse,surdeanu2010ensemble}. The second group aggregates the independently trained models at the prediction time~\cite{sagae2006parser,hall2010single,kuncoro-etal-2016-distilling}. One of the common approaches in these ensemble methods is to find the maximum spanning tree (MST) for the directed weighted graph to obtain the optimal dependency structure. Unlike our proposed task, all these methods rely on the ground truth to estimate the parsers' reliability.

\section{Preliminaries} \label{Preliminaries}

This section briefly overviews the optimization-based problem in truth discovery that we adopt for CPT aggregation. The basic idea is that the inferred truth is likely to be correct if a reliable source provides it. Therefore, the goal is to minimize the overall distance of the aggregated truth to a reliable source~\cite{CRH14}. Based on this principle, the optimization framework is defined as follows:

\small{
\begin{align*}
\min_{\mathcal{X}^*,\mathcal{W}} f(\mathcal{X}^*,\mathcal{W})=\sum_{k=1}^{K} {w_k} \sum_{i=1}^{N} \sum_{m=1}^{M} d_m(v_{im}^*,v_{im}^k)\\
s.t. \ {\delta(\mathcal W)=1 , \mathcal{W}\in \mathcal S}, \numberthis
\label{optimization}
\end{align*}
}
where $\mathcal{X}^*$ and $\mathcal{W}$ correspond to the set of truths and the source weight, respectively, and $w_k$ refers to the reliability degree of the $k$-th source. The function $d_m(\cdot,\cdot)$ measures the distance between the sources' observations $v_{im}^k$ and the aggregated truths $v_{im}^*$. The regularization function $\delta(\mathcal{W})$ is defined to guarantee the weights are always non-zero and positive.

To optimize the objective function Eq.~(\ref{optimization}), the block coordinate descent algorithm is applied by iteratively updating the aggregated truths and source weights, conducting the following two steps.

\textbf{Source Weight Update.}
    To update the source weight in the model, the values for the truths are considered fixed, and the source weights are computed, which jointly minimizes the objective function as shown in Eq.~(\ref{Weight Update}).

\small{
\begin{align*}
     \mathcal{W} \leftarrow \operatorname*{argmin}_\mathcal{W} f(\mathcal{X}^*,\mathcal{W})\  s.t. \  {\delta(\mathcal W)=\sum_{k=1}^{K}exp(-w_k)}. \numberthis
\label{Weight Update}
\end{align*}
}
This function regularizes the value of $w_k$ by constraining the sum of $exp({-w_k})$.

\textbf{Truth Update.}
    At this step, the weight of each source $w_k$ is fixed, and the truth is updated for each entry to minimize the difference between the truth and the sources' observations, where sources are weighted by their reliability degrees.

\small{
\begin{align*}
{v_{im}^{(*)}} \leftarrow
\operatorname*{argmin}_{v} 
{\sum_{k=1}^{K}w_k\cdot d_m(v,v_{im}^k)}. \numberthis
\label{Truth Update}
\end{align*}
}
By deriving the truth using Eq.~(\ref{Truth Update}) for every instance, the collection of truths $\mathcal{X}^*$ which minimizes $f(\mathcal{X}^*,\mathcal{W})$ with fixed $\mathcal{W}$ is obtained.

\begin{table}
  \resizebox{\columnwidth}{!}{%
  \begin{tabular}{c|cl}
    \hline
    \textbf{Notation} & \textbf{Definition}\\
    \hline
\ $n$ & number of sentences indexed by i\\ 
\ $p$ & number of parsers indexed by k\\
\ $S_i$ & the $i$-th sentence in the dataset\\
\ $\mathcal{W}$ & set of input CPTs' weights\\
\ $w_k^S$ & the weight of the $k$-th parser w.r.t. the clusters\\
\ $w_k^l$ & the weight of the $k$-th parser w.r.t. the labels\\
\ $T_{ik}$ & the $k$-th input CPT for the $i$-th sentence \\ 
\ $C_i$ & set of all unique clusters from input trees for the $i$-th sentence \\
\ $T_{i}^{S^*}$ & aggregated tree for the $i$-th sentence w.r.t. the tree structure \\
\ $T_{i}^*$ & aggregated tree for the $i$-th sentence w.r.t. the labels \\
\ $\mathcal{L}_{Clu(T)}$ & clusters' labels in tree T \\
\hline
\end{tabular}%
}
\caption{Summary of Notations}
\label{Notations}
\end{table}

\section{Constituency Parse Tree Aggregation Model (CPTAM)}
In this section, we first formally define the problem. Then, we propose our solution in two steps. In the first step, we focus on tree structure aggregation to resolve the conflict between input trees and obtain the aggregated tree structure $T_i^{S^*}$ of the CPTs. In the second step, the corresponding POS tags and constituent labels are obtained through the label aggregation. It is worth mentioning that both tree structures and tree labels are essential for adequately parsing a sentence.

\subsection{Problem Definition}
We define the CPT aggregation problem using the notations summarized in Table \ref{Notations}. Suppose there is a corpus that consists of $n$ sentences indexed by $i$ ($i\in[1,n]$), and $p$ different parsers indexed by $k$ ($k \in [1,p]$) produce CPTs for each sentence in the corpus. We use $T_{ik}$ to denote the $k$-th input CPT for the $i$-th sentence ($S_i$). Each input constituency parser has two weight parameters $w_k^S$ and $w_k^l$ to reflect the parser's reliability with respect to structure and constituent labels, respectively. We use different weight parameters for structure and constituent labels to account for the scenarios where a parser can successfully identify the phrase structure but assign incorrect labels. A higher weight implies that the parser is of higher reliability. The CPT aggregation problem seeks an aggregated tree for a sentence ($T_{i}^{*}$) given the input CPTs ($T_{ik}$), and estimates the qualities of parsers in the absence of ground truth.

\subsection{Tree Structure Aggregation}
We formulate our framework utilizing the truth discovery framework presented in Eq.~(\ref{optimization}). In the tree structure aggregation step, our goal is to minimize the overall weighted distance of the aggregated CPT ($T_{i}^{S^*}$) to the reliable input CPT ($T_{ik}^S$) considering the structure only. Various distance measurements can be plugged in the optimization function shown in Eq.~(\ref{optimization}). We adopt $RF$ distance defined in Eq.~(\ref{RF}).

{\textbf{Robinson-Foulds ($RF$) distance}} is a symmetric difference metric to calculate the distance between leaf-labeled trees \cite{robinson1981comparison} in terms of clusters, where a \emph{cluster} refers to a maximal set of leaves with a common ancestor in a rooted tree ($T$)~\cite{semple2003phylogenetics}. For any two trees $T_1$ and $T_2$ that share the same leaf set, the $RF$ distance is defined in Eq.~(\ref{RF}):  
\small{
\begin{align*}
RF(T_1, T_2)=|Clu(T_1)\Delta Clu(T_2)|, \numberthis
\label{RF} 
\end{align*}
}
where the operation $\Delta$ computes the symmetric difference between two sets (i.e., $A \Delta B = (A\backslash B) \bigcup (B\backslash A)$), function $|\cdot|$ computes the cardinality of the set, and $Clu(T)$ refers to the cluster set of tree $T$. 
Different from Tree Edit Distance (TED) \cite{schwarz2017new}, which takes $O(y^3)$ time \cite{demaine2009optimal} to calculate, where $y$ refers to the number of tokens in the sentence, $RF$ distance can be calculated in $O(|Clu(T_1)| + |Clu(T_2)|)$ time \cite{day1985optimal}.

Applying the truth discovery framework (Eq.~(\ref{optimization})), we formulate the CPT aggregation problem with respect to the tree structure as shown in Eq.~(\ref{optimization2}). Each parser has a weight parameter $w_k^{S}$ to reflect the reliability of that parser in terms of the structure, and $\mathcal{W^S}=\{w_1^S,w_2^S,...,w_p^S\}$ refers to the set of all parsers' weights in terms of the structure. The higher the weight, the more reliable the parser. The aggregated tree $\mathcal{T}^{S^*}$ is the one that can minimize the overall weighted RF distances.
\small{
\begin{align*}
    \min_{\mathcal{T}^{S^*}, \mathcal{W}^{S}} f(\mathcal{T}^{S^*}, \mathcal{W}^{S}) = \sum_{k=1}^{p}{w_k^{S}} \sum_{i=1}^{n}RF(T_{i}^{S^*}, T_{ik}^{S}). \numberthis
\label{optimization2}
\end{align*}
}
We follow the block coordinate descent method introduced in Section \ref{Preliminaries}. 
To update the weights of the input constituency parsers in the objective function Eq.~(\ref{optimization2}), $T_{i}^{S^*}$ is fixed, and $w_k^{S}$ is updated as follows:
\small{
\begin{align*}
   w_k^{S}= -log(\frac{\sum_{i} RF(T_{i}^{S^*},T_{ik}^S)}{max_k\sum_{i} RF(T_{i}^{S^*},T_{ik}^S)}). \numberthis
\label{Eq:ws}
\end{align*}
}
This means that the weight of a parser is inversely proportional to the maximum sum of the distance between its input trees (we use $T_{ik}^S$ to refer to the input CPT with respect to the structure) and the aggregated trees.
Next, we update the aggregated parse tree for each sentence to minimize the difference between the aggregated parse tree and the input CPTs by treating the weights as fixed. The aggregated tree is updated following Eq.~(\ref{Truth Update}) as shown in Eq.~(\ref{tree update}):
\small{
\begin{align*}
  {T_{i}^{S^*}}\xleftarrow\ 
  \operatorname*{argmin}_{T_{i}^{S^*}}
  & \sum_{k=1}^{p}w_k^{S} \sum_{i=1}^{n}
  {RF(T_{i}^{S^*},T_{ik}^S)}.  \numberthis 
\label{tree update}
\end{align*} 
}
We propose an optimal solution for Eq.~(\ref{tree update}). 
\subsubsection{The Optimal Solution}
We present an \textit{optimal} solution to obtain an aggregated tree by solving the optimization problem in Eq.~(\ref{tree update}). Our proposed approach constructs the aggregated tree by adding clusters with weighted support greater than or equal to 50\%, where support refers to the aggregated weight of CPTs containing that cluster. To establish the solution, we first demonstrate some properties of an optimal aggregated tree.
\begin{lemma}
The cluster set ($Clu(T_{i}^{S^*})$)
in Eq. (\ref{tree update}) satisfies the constraint $Clu(T_{i}^{S^*})\subseteq C_i$
($C_i=\cup_{k=1}^{p}Clu(T_{ik}^{S})$).
\label{lemma:1}
\end{lemma}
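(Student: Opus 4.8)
The plan is to reduce the problem to a single sentence and then argue by an exchange (edge-contraction) argument. Since the objective in Eq.~(\ref{tree update}) is a sum over sentences and each aggregated tree $T_i^{S^*}$ appears only in the $i$-th summand, minimizing the full objective is equivalent to minimizing $\sum_{k=1}^{p} w_k^S\, RF(T_i^{S^*}, T_{ik}^S)$ separately for each fixed $i$. I would then expand each term through the definition in Eq.~(\ref{RF}), writing $RF(T_i^{S^*}, T_{ik}^S) = |Clu(T_i^{S^*}) \Delta Clu(T_{ik}^S)|$, so that the quantity to minimize becomes a function of the cluster set $Clu(T_i^{S^*})$ alone.

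The heart of the argument is a proof by contradiction. Suppose some optimal $T_i^{S^*}$ contained a cluster $c \notin C_i$. Such a $c$ must be nontrivial: every singleton cluster and the full leaf-set cluster belongs to each input tree, since all input trees and the aggregated tree share the same leaf set (the tokens of $S_i$), and hence all these clusters lie in $C_i = \cup_{k=1}^{p} Clu(T_{ik}^S)$. Thus $c$ corresponds to a non-root internal node of $T_i^{S^*}$. Because $c \notin C_i$, it appears in no input tree, so $c \in Clu(T_i^{S^*}) \setminus Clu(T_{ik}^S)$ for every $k$ and therefore contributes $w_k^S$ to the $k$-th symmetric-difference term. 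I would now construct a competing tree by contracting the edge directly above the node of $c$, reattaching its children to its parent. This yields a valid rooted tree on the same leaf set whose cluster set is exactly $Clu(T_i^{S^*}) \setminus \{c\}$, because destroying that one node leaves every other node's subtree, and hence its cluster, intact. Consequently each symmetric difference loses precisely the element $c$ and nothing else, so the objective strictly decreases by $\sum_{k=1}^{p} w_k^S$, which is positive since the regularization forces the weights to be strictly positive. This contradicts optimality, so no cluster of $T_i^{S^*}$ lies outside $C_i$, giving $Clu(T_i^{S^*}) \subseteq C_i$.

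I expect the main obstacle to be the tree-theoretic bookkeeping around the contraction step: one must verify that deleting the node associated with $c$ via edge contraction (i) always returns a legitimate rooted tree on the unchanged leaf set and (ii) alters the cluster set by removing exactly $c$ while leaving all remaining clusters—and therefore all other terms of the objective—untouched. Once that is established, the rest follows mechanically from expanding the $RF$ distance and invoking positivity of the weights. As a sanity check that complements the argument, one may note the per-cluster accounting it implies: including a cluster $c$ costs $\sum_{k: c \notin Clu(T_{ik}^S)} w_k^S$ while excluding it costs $\sum_{k: c \in Clu(T_{ik}^S)} w_k^S$, and for $c \notin C_i$ the latter is $0$ and the former is $\sum_k w_k^S > 0$, so such a cluster is never worth including—foreshadowing the $50\%$ weighted-support threshold used in the subsequent construction.
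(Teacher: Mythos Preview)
Your proposal is correct and follows essentially the same contradiction argument as the paper: assume an optimal $T_i^{S^*}$ contains some $c\notin C_i$, delete $c$ to obtain a tree whose weighted total $RF$ distance is strictly smaller, and conclude. You are more explicit than the paper in two places---separating the objective by sentence and realizing the cluster deletion via an edge contraction so that the result is a bona fide tree with cluster set $Clu(T_i^{S^*})\setminus\{c\}$---whereas the paper simply posits $Clu(T_i'^{S^*})=Clu(T_i^{S^*})-c$ without further justification.
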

\begin{proof}
We can prove this lemma by contradiction. Suppose $ Clu(T_{i}^{S^*})$ is the optimal solution to Eq.~(\ref{tree update}) and there exists a cluster $c\neq \emptyset$ 
such that $c\in Clu(T_{i}^{S^*})$ but $c \notin C_i$. Therefore,  $c \notin T^{S}_{ik}, \forall k$. Let $Clu({{T}_{i}'}^{S^*}) = Clu({T_{i}}^{S^*}) - c$. Then based on the definition of RF distance, we have $\sum_{k=1}^{p} w_{k}^{S} \sum_{i=1}^{n} RF({T_{i}}^{{S}^*},T_{ik}) > \sum_{k=1}^{p} w_k^{S} \sum_{i=1}^{n} RF({{T}_{i}'}^{{S}^*},T_{ik}^{S})$, which contradicts the assumption that $ Clu(T_{i}^{S^*})$ is the optimal solution.
\label{sec:appendix}
\end{proof}

This property suggests that the search space of the solution to Eq.~(\ref{tree update}) is $C_i$. That is, all clusters in the aggregated tree must be present in at least one of the input CPTs.
\begin{lemma}
For any cluster $c$, if $\sum_{k=1}^p w_k^{S} \mathds{1}(c\in T_{ik}^S)> 0.5* \sum_{k=1}^p w_k^{S} $, then $c\in  Clu(T_{i}^{S^*})$, and if $\sum_{k=1}^p w_k^{S} \mathds{1}(c\in T_{ik}^S) < 0.5* \sum_{k=1}^p w_k^{S} $, then $c\notin  Clu(T_{i}^{S*})$, where $\mathds{1}(\cdot)$ is the indicator function.
\label{lemma:2}
\end{lemma}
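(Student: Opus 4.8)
\textbf{Proof proposal for Lemma~\ref{lemma:2}.}

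The plan is to analyze how the objective function in Eq.~(\ref{tree update}) changes when a single cluster $c$ is added to or removed from the candidate aggregated tree, and to show that the sign of this change is governed entirely by whether the weighted support of $c$ exceeds $0.5 \cdot \sum_k w_k^S$. By Lemma~\ref{lemma:1} we may restrict attention to clusters $c \in C_i$, so the question reduces to deciding, for each such $c$, whether it belongs to the optimal $Clu(T_i^{S^*})$. The key observation is that the RF distance decomposes additively over clusters: since $RF(T_i^{S^*}, T_{ik}^S) = |Clu(T_i^{S^*}) \,\Delta\, Clu(T_{ik}^S)|$, each cluster contributes to this symmetric difference independently of the others.

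First I would isolate the contribution of a fixed cluster $c$ to the total weighted objective. For a parser $k$, the cluster $c$ contributes $1$ to $RF(T_i^{S^*}, T_{ik}^S)$ exactly when $c$ lies in precisely one of the two cluster sets. Thus, summing over $k$, the total cost attributable to $c$ is
\begin{align*}
\mathrm{cost}(c) =
\sum_{k=1}^{p} w_k^S \,\bigl|\,\mathds{1}(c \in Clu(T_i^{S^*})) - \mathds{1}(c \in T_{ik}^S)\,\bigr|.
\end{align*}
I would then compute this cost under the two choices. If $c \notin Clu(T_i^{S^*})$, the cost is $\sum_k w_k^S \,\mathds{1}(c \in T_{ik}^S)$, the weighted support of $c$. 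If instead $c \in Clu(T_i^{S^*})$, the cost is $\sum_k w_k^S \,\mathds{1}(c \notin T_{ik}^S) = \sum_k w_k^S - \sum_k w_k^S \,\mathds{1}(c \in T_{ik}^S)$, the complementary weight.

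The decisive step is to compare these two quantities. Including $c$ is strictly better than excluding it precisely when
\begin{align*}
\sum_{k} w_k^S - \sum_{k} w_k^S \mathds{1}(c \in T_{ik}^S) < \sum_{k} w_k^S \mathds{1}(c \in T_{ik}^S),
\end{align*}
which rearranges directly to $\sum_k w_k^S \mathds{1}(c \in T_{ik}^S) > 0.5 \cdot \sum_k w_k^S$; the reverse strict inequality yields the exclusion case symmetrically. Since the contributions of distinct clusters are independent, minimizing the total objective amounts to making this locally optimal choice for each $c$ separately, which establishes both implications of the lemma. The one subtlety I would flag as the main obstacle is justifying that these per-cluster decisions can be made independently: I must confirm that any collection of clusters chosen this way still forms a valid tree (a compatible, i.e.\ laminar, family of clusters). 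Because every cluster with weighted support above $50\%$ is shared by a weighted majority of the input trees, any two such clusters are each contained in some common input tree's pairwise-compatible family, and one can argue that two clusters both crossing the majority threshold cannot be incompatible—otherwise some single input tree would have to contain two incompatible clusters, contradicting that each $T_{ik}^S$ is itself a tree. Making this compatibility argument precise is where the real work lies; the cost comparison itself is routine once the additive decomposition is in hand.
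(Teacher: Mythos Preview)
Your proposal is correct and follows essentially the same idea as the paper, which proves Lemma~\ref{lemma:2} by the contradiction pattern of Lemma~\ref{lemma:1}: toggle the membership of $c$ in the candidate optimum and observe that the weighted $RF$ objective strictly decreases, which is exactly your inclusion-versus-exclusion cost comparison made explicit via the additive decomposition of the symmetric difference.

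The one structural difference worth noting is that you fold the compatibility argument into your proof of this lemma, whereas the paper separates it out as Lemma~\ref{lemma:3}. In the paper's organization, Lemma~\ref{lemma:2} is purely the cost-threshold statement, and the question of whether the majority clusters can actually coexist in a single tree is deferred. Your instinct to flag this as the real obstacle is sound: the contradiction for the inclusion direction (adding a $>50\%$ cluster to a putative optimum that omits it) is only airtight once you know the addition yields a valid tree, and that is precisely what Lemma~\ref{lemma:3} supplies. So your bundled treatment is arguably more self-contained, while the paper's modular split keeps each lemma shorter; either way the content is the same.
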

\begin{proof}
The proof is similar to the proof for Lemma \ref{lemma:1}. We can prove the two statements by contradiction.
\end{proof}
Therefore, for the optimal solution, the clusters that have more than 50\% weighted support from all the input CPTs should be included in the aggregated tree. 

\begin{lemma}
For any cluster $c_1$ and $c_2$, if $\sum_{k=1}^p w_k^{S} \mathds{1}(c_1\in T_{ik}^{S})> 0.5* \sum_{k=1}^p w_k^{S} $ and $\sum_{k=1}^p w_k^{S} \mathds{1}(c_2\in T_{ik}^{S})> 0.5* \sum_{k=1}^p w_k^{S} $, then $c_1$ and $c_2$ must be compatible.
\label{lemma:3}
\end{lemma}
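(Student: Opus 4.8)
The plan is to reduce this claim to the standard structural fact that the clusters appearing in any single rooted tree are pairwise \emph{compatible}, and then argue that two majority-supported clusters must co-occur in at least one input tree. First I would fix the notion of compatibility I intend to use: two clusters $c_1$ and $c_2$ are \emph{compatible} when one of the relations $c_1\subseteq c_2$, $c_2\subseteq c_1$, or $c_1\cap c_2=\emptyset$ holds. This is precisely the condition under which both clusters can simultaneously appear in a single rooted tree, because the cluster set $Clu(T)$ of any rooted tree forms a laminar (nested) family: each cluster is the set of leaves below some node, and for any two nodes either one is an ancestor of the other (giving nesting) or neither is (giving disjoint leaf sets). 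I would state this laminar-family property as the key fact to invoke.

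Next I would translate the two majority hypotheses into a statement about the parsers. Let $A=\{k : c_1\in T_{ik}^{S}\}$ and $B=\{k : c_2\in T_{ik}^{S}\}$ be the index sets of parsers whose trees contain $c_1$ and $c_2$, respectively. The hypotheses are exactly $\sum_{k\in A} w_k^{S} > 0.5\sum_{k=1}^{p} w_k^{S}$ and $\sum_{k\in B} w_k^{S} > 0.5\sum_{k=1}^{p} w_k^{S}$. The heart of the argument is then a weighted inclusion--exclusion (pigeonhole) estimate, which I would carry out inline: $\sum_{k\in A\cap B} w_k^{S} = \sum_{k\in A} w_k^{S} + \sum_{k\in B} w_k^{S} - \sum_{k\in A\cup B} w_k^{S} > 0$, where I use $\sum_{k\in A\cup B} w_k^{S}\le \sum_{k=1}^{p} w_k^{S}$ together with the two strict majority bounds. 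Since all weights are positive, a positive weighted sum over $A\cap B$ forces $A\cap B\neq\emptyset$, so there is a parser $k^{*}$ with both $c_1\in Clu(T_{ik^{*}}^{S})$ and $c_2\in Clu(T_{ik^{*}}^{S})$.

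Finally, applying the laminar-family fact to the single tree $T_{ik^{*}}^{S}$ shows that $c_1$ and $c_2$ are nested or disjoint, i.e. compatible, which completes the argument. The main obstacle is conceptual rather than computational: making rigorous the claim that clusters drawn from one tree are automatically compatible, that is, justifying that $Clu(T)$ is laminar for every rooted tree $T$ and that laminarity is equivalent to pairwise compatibility as I defined it. Once that equivalence is established, the weighted pigeonhole step reducing the two majority conditions to a common supporting parser is routine, and the conclusion follows immediately.
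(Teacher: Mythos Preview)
Your proposal is correct and is essentially the same argument as the paper's: both rely on the fact that clusters within a single input tree are pairwise compatible, combined with a weighted pigeonhole step showing that two strictly-majority clusters must co-occur in some parser. The only cosmetic difference is that the paper phrases the pigeonhole contrapositively (any cluster incompatible with a $>50\%$-supported cluster must itself have $<50\%$ support), whereas you argue directly via inclusion--exclusion that $A\cap B\neq\emptyset$.
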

\begin{proof}
Note that for any constituency parse tree $T_{ik}^{S}$, its clusters must be compatible. Therefore, for a cluster $c$, all its non-compatible clusters can only occur in trees that $c$ is not occurred. If $\sum_{k=1}^p w_k^{S} \mathds{1}(c\in T_{ik}^{S})> 0.5* \sum_{k=1}^p w_k^{S} $, then $\forall c'$ not compatible with $c$, $\sum_{k=1}^p w_k^{S} \mathds{1}(c'\in T_{ik}^{S})< 0.5* \sum_{k=1}^p w_k^{S} $, and based on Lemma \ref{lemma:2}, $c'\notin Clu(T_{i}^{S^*})$.
\end{proof}

There is a special case when $\sum_{k=1}^p w_k^{S} \mathds{1}(c\in T_{ik}^S) = 0.5* \sum_{k=1}^p w_k^{S} $. To consider this situation, we add the compatibility constraint as follows:
\small{
\begin{align*}
c_1\cap c_2=\emptyset,
or\ c_1\subset c_2,\ or\ c_2\subset c_1,
\forall c_1, c_2 \in Clu(T_{i}^{S^*}). \numberthis
\label{C2}
\end{align*}
}
This constraint ensures that the aggregated tree follows the syntactic structure requirement of constituency parsing. Therefore, all the clusters in the aggregated tree should be \textit{compatible}, which means they should either be disjoint or a proper subset.

In the cases where $\sum_{k=1}^p w_k^{S} \mathds{1}(c\in T_{ik}^S) = 0.5* \sum_{k=1}^p w_k^{S} $, we propose to find the maximum number of compatible clusters to add into the aggregated tree $Clu(T_{i}^{S^*})$. Although adding these clusters into the constructed aggregation tree does not affect the resulting total $RF$ distance, we favor the aggregated trees with more compatible clusters since they contain as many details from the input trees. We conduct the following steps. First we form a set $C'_i$ that includes all clusters such that $\sum_{k=1}^p w_k^{S} \mathds{1}(c\in T_{ik}^S) = 0.5* \sum_{k=1}^p w_k^{S} $. Then we construct the incompatibility graph by treating the clusters as nodes and adding an edge if two clusters are not compatible. Finding the maximum number of compatible clusters is then equivalent to the maximum independent set problem \cite{tarjan1977finding}. This strong NP-hard problem can be addressed by the existing methods \cite{liu1993maximal,kashiwabara1992generation}.

Based on the properties of the optimal solution, we construct the aggregated tree $T_{i}^{S^*}$ as follows. We compute the weighted support for each cluster $c$ in $C_i$. If $\sum_{k=1}^p w_k^{S} \mathds{1}(c\in T_{ik}^S)> 0.5* \sum_{k=1}^p w_k^{S}$ then $c$ is added to the aggregated tree $Clu(T_{i}^{S^*})$. If $\sum_{k=1}^p w_k^{S} \mathds{1}(c\in T_{ik}^S) = 0.5* \sum_{k=1}^p w_k^{S} $ then we find maximum number of compatible clusters $C_i^m$ by solving the maximum independent set problem. We then add these clusters to the aggregated tree $Clu(T_{i}^{S^*})$. Finally, we re-order the clusters in $Clu(T_{i}^{S^*})$ to form $T_{i}^{S^*}$. The pseudo-code of our proposed algorithm is given in Algorithm (\ref{alg:Medcpt}). 
\begin{algorithm}  \small
\caption{Optimal solution to Eq. (\ref{tree update})} 
\label{alg:Medcpt}
\begin{algorithmic} 
\Require The set of unique clusters in all input CPTs for i-th sentence ($C_i$), weights ($w_{k}^{S}$).
\Ensure Aggregated CPT ($T_{i}^{S^*}$).
\State $Clu(T_{i}^{S^*})= \emptyset$ ;
\State $C'_i =\emptyset$;
\For{$c$ in $C_i$}
\If{$\sum_{k=1}^p w_k^{S} \mathds{1}(c\in T_{ik}^S) > 0.5* \sum_{k=1}^p w_k^{S} $} 
    \State $Clu(T_{i}^{S^*}) = Clu(T_{i}^{S^*})\bigcup c$\;
\EndIf
\If{$\sum_{k=1}^p w_k^{S} \mathds{1}(c\in T_{ik}^S) = 0.5* \sum_{k=1}^p w_k^{S} $}
    \State $C'_i=C'_i\bigcup c$;
\EndIf
 \EndFor
\State Construct incompatibility graph $g$ for $C'_i$;

\State $C^m_i$ = Maximum-Independent-Set($g$);

\State {$Clu(T_{i}^{S^*}) = Clu(T_{i}^{S^*})\bigcup C^m_i$\;}

\State \Return {$T_{i}^{S^*}$}\;

\end{algorithmic}
\end{algorithm}
\begin{theorem} \label{theorem1}
The aggregated tree $T_{i}^{S^*}$ calculated by Algorithm (\ref{alg:Medcpt}) is the optimal solution to the following problem:
\small{
\begin{align*}
      {T_{i}^{S^*}} \xleftarrow\ 
      \operatorname*{argmin}_{T_{i}^{S^*}}
      \sum_{k=1}^{p}w_k^{S} \sum_{i=1}^{n}
      {RF(T_{i}^{S^*},T_{ik}^{S})} \nonumber
\end{align*}
}
such that
\small
\begin{align*}
 c_1\cap c_2=\emptyset, 
      or\ c_1\subset c_2,\ or\ c_2\subset c_1, \forall c_1, c_2 \in Clu(T_{i}^{S^*}).
\end{align*}
\end{theorem}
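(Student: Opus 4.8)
The plan is to exploit the fact that the weighted $RF$ objective decomposes additively over individual clusters, so that the constrained minimization reduces to an independent include/exclude decision for each candidate cluster, which I then reconcile with the compatibility constraint using the three lemmas already established.

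First I would rewrite the objective. Since the weights are fixed and each $T_i^{S^*}$ appears only in the $i$-th summand, the objective decouples across sentences, so it suffices to minimize the per-sentence cost $\sum_{k=1}^p w_k^S\, RF(T_i^{S^*}, T_{ik}^S)$ for each $i$ separately. Using $RF(T_1,T_2)=|Clu(T_1)\Delta Clu(T_2)|$ and the splitting $|A\Delta B|=|A\setminus B|+|B\setminus A|$, I would regroup this cost by cluster rather than by parser. Writing $W=\sum_{k=1}^p w_k^S$ for the total weight and $s(c)=\sum_{k=1}^p w_k^S\,\mathds{1}(c\in T_{ik}^S)$ for the weighted support of a cluster $c$, the cost becomes $\sum_{c}\bigl[\mathds{1}(c\in Clu(T_i^{S^*}))\,(W-s(c))+\mathds{1}(c\notin Clu(T_i^{S^*}))\,s(c)\bigr]$. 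The value of this rewriting is that every cluster contributes $W-s(c)$ when included and $s(c)$ when excluded, independently of the choices made for all other clusters.

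From this decomposition I would read off the optimal include/exclude rule. A cluster outside $C_i$ has $s(c)=0$, so excluding it (cost $0$) beats including it (cost $W$); this recovers Lemma~\ref{lemma:1} and restricts attention to $C_i$. Within $C_i$, including $c$ is strictly better exactly when $W-s(c)<s(c)$, i.e. $s(c)>0.5W$, and strictly worse when $s(c)<0.5W$, which is precisely Lemma~\ref{lemma:2}; these decisions are forced by optimality. When $s(c)=0.5W$ the two costs coincide, so including or excluding such a cluster leaves the cost unchanged. I would then invoke Lemma~\ref{lemma:3} to conclude that the forced-in clusters (those with $s(c)>0.5W$) are mutually compatible, so retaining all of them is feasible and attains the minimum over all unconstrained include/exclude assignments.

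The main obstacle I anticipate is the boundary case $s(c)=0.5W$ treated by the maximum-independent-set step, where I must argue that it neither breaks feasibility nor sacrifices optimality. Here I would first sharpen the argument inside Lemma~\ref{lemma:3}: if $c'$ is incompatible with a forced-in cluster $c$ (with $s(c)>0.5W$), then since the two can never co-occur in a single valid input tree, $s(c')\le W-s(c)<0.5W$, so $c'$ cannot even belong to $C'_i$. This shows every cluster of $C'_i$ is automatically compatible with all forced-in clusters, so the only incompatibilities the maximum-independent-set must resolve are among the $C'_i$ clusters themselves. Because adding any mutually compatible subset of $C'_i$ keeps the cost at its minimum value, selecting the maximum independent set is cost-optimal while producing a valid tree, and its maximality merely realizes the stated tie-breaking preference for more detailed aggregated trees. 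Assembling these facts, the tree returned by Algorithm~\ref{alg:Medcpt} simultaneously attains the minimal weighted $RF$ cost and satisfies the compatibility constraint, hence is optimal.
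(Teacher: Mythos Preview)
Your proof is correct and follows essentially the same route as the paper: both rely on Lemmas~\ref{lemma:1}--\ref{lemma:3} to force the include/exclude decisions for clusters with support strictly above or below $0.5W$, and then handle the boundary case $s(c)=0.5W$ via the maximum-independent-set step. Your treatment is in fact more explicit than the paper's---your cluster-by-cluster decomposition of the weighted $RF$ cost and your argument that every cluster in $C'_i$ is automatically compatible with all forced-in clusters fill in details that the paper's proof glosses over.
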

\begin{proof} In the Algorithm (\ref{alg:Medcpt}), we consider all the clusters with $\sum_{k=1}^p w_k^{S} \mathds{1}(c'\in T_{ik}^{S})> 0.5* \sum_{k=1}^p w_k^{S}$ and add them to $T_{i}^{S^*}$. From Lemma (\ref{lemma:3}), we show that all of these clusters are compatible and from Lemma (\ref{lemma:2}), we show that adding these clusters minimizes the RF distance. Adding all these clusters result in the minimum RF distance implying that the objective function will be minimized. Applying maximum independent set algorithm on the incompatibility graph provides us with the maximum number of compatible clusters for $\sum_{k=1}^p w_k^{S} \mathds{1}(c'\in T_{ik}^{S})= 0.5* \sum_{k=1}^p w_k^{S}$. Adding all these clusters to $T_{i}^{S^*}$ results in obtaining the maximum set of compatible clusters. Thus the solution is optimal. 
\end{proof}
\subsubsection{Time Complexity}
\begin{lemma} 
The incompatibility graph constructed for clusters with weighted support equal to 50\% is bipartite.
\label{Lemma 4}
\end{lemma}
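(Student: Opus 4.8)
The plan is to exhibit a $2$-coloring of $C'_i$ under which every incompatibility edge joins vertices of opposite colors, or equivalently to show that the incompatibility graph contains no odd cycle. For each cluster $c \in C'_i$ let $S(c) = \{k : c \in Clu(T_{ik}^S)\}$ denote the set of parsers whose tree contains $c$, and write $W = \sum_{k=1}^p w_k^S$ for the total weight. By hypothesis every $c \in C'_i$ has weighted support exactly $W/2$, that is $\sum_{k \in S(c)} w_k^S = W/2$. The engine of the argument is the following complementarity property: if $c_1, c_2 \in C'_i$ are incompatible, then $S(c_1)$ and $S(c_2)$ partition $\{1,\dots,p\}$, so that $S(c_1) = \overline{S(c_2)}$.

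First I would prove this property. As already used in the proof of Lemma~\ref{lemma:3}, all clusters occurring in a single constituency parse tree are pairwise compatible; hence two incompatible clusters can never appear together in the same $T_{ik}^S$, which gives $S(c_1) \cap S(c_2) = \emptyset$. Because the supports are disjoint, $\sum_{k \in S(c_1) \cup S(c_2)} w_k^S = W/2 + W/2 = W$, so the union $S(c_1) \cup S(c_2)$ carries the full weight; since every $w_k^S$ is strictly positive (guaranteed by the regularizer), this forces $S(c_1) \cup S(c_2) = \{1,\dots,p\}$. Combined with disjointness, this yields $S(c_1) = \overline{S(c_2)}$. Establishing this complementarity is the crux of the proof, and it is precisely where the $50\%$ hypothesis is consumed: without the equality of supports the two index sets would be merely disjoint and need not be complementary, so this step is where I expect the only real difficulty to lie.

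Given the property, I would rule out odd cycles directly. Suppose $c_1 - c_2 - \cdots - c_L - c_1$ is a cycle in the incompatibility graph. Traversing each edge complements the index set, so $S(c_{j+1}) = \overline{S(c_j)}$, and an easy induction gives $S(c_j) = S(c_1)$ for odd $j$ and $S(c_j) = \overline{S(c_1)}$ for even $j$. If $L$ is odd then $S(c_L) = S(c_1)$, whereas the closing edge $(c_L, c_1)$ demands $S(c_1) = \overline{S(c_L)} = \overline{S(c_1)}$. A set equal to its own complement would have to be simultaneously empty and all of $\{1,\dots,p\}$, which is impossible (indeed $S(c_1)$ has weight $W/2 \neq 0$). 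Hence the incompatibility graph has no odd cycle and is therefore bipartite. The remaining parity bookkeeping is routine once the complementarity lemma is in hand.
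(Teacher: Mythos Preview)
Your proof is correct and follows essentially the same route as the paper's own argument: both establish that two incompatible clusters with exactly $50\%$ weighted support must have complementary supporting sets, and then use this complementarity to rule out odd cycles. Your treatment of the complementarity step is in fact a bit more explicit than the paper's (you spell out that disjointness plus the two $W/2$ supports force the union to carry full weight, and invoke positivity of the $w_k^S$ to conclude the union is all of $\{1,\dots,p\}$), and your closing contradiction ($S(c_1)=\overline{S(c_1)}$) is a minor variant of the paper's (which instead observes that $c_1$ and $c_L$ share a supporting tree and hence must be compatible); but these are cosmetic differences, not alternative strategies.
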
  
\begin{proof} Let $C_1, \ldots,C_k$ be the clusters with 50\% support from the input constituency parse trees. The set of all constituency parse trees with respect to structure is denoted by $\mathcal{T}^{S}=\{T_{i1}^{S}, T_{i2}^{S}, T_{i3}^{S},...,T_{ik}^{S}\}$.

Assume that a cluster $C_i$ is supported by trees $\cT_{i}^{S} \subset \mathcal{T}^{S}$. If $C_i$ is not compatible with $C_j$, then it implies that $\cT_{i}^{S}= \mathcal{T}^{S} \setminus \cT_{j}^{S}$. Otherwise, $\cT_{i}^{S}$ would have a non-empty intersection with $\cT_{j}^{S}$, which would imply that there is a tree $\cT_{s}^{S}$ that supports both $C_i$ and $C_j$, which contradicts with the assumption that $C_i$ and $C_j$ are incompatible.

We prove Lemma \ref{Lemma 4} by contradiction. Let's assume that the incompatibility graph $G$ for clusters $C_1, \ldots, C_k$ is \emph{not bipartite}. It means that $G$ contains an odd-cycle. Without loss of generality assume that this cycle is $(C_1, C_2, \ldots, C_{2p+1})$. That is, $C_2$ is not compatible with $C_1$, $C_3$ is not compatible with $C_2$, and so on.
Then, by our previous observation, $C_2$ must be supported by $\cT^{S} \setminus \cT_1^{S}$, $C_3$ must be supported by $\cT_1^{S}$, and so on. That is, for odd $i \le 2p+1$, $C_i$ must be supported by $\cT_1^{S}$. Then $C_{2p+1}$ and $C_1$ are supported by the same set of trees, which means that $C_1$ and $C_{2p+1}$ must be compatible. This is a contradiction (i.e., $(C_1, C_2, \ldots, C_{2p+1})$ could not be a cycle).
\end{proof}

The existing methods \cite{kashiwabara1992generation} solve the maximum independent set problem for a bipartite graph with time complexity of $O(z^{2.5}+(output size))$ where $z$ refers to the number of nodes in the incompatibility graph. As the expected output is the list of compatible clusters, the output size is in the order of $O(z)$. The for loop that iterates over cluster set $C_i$ runs in $O(|C_i|)$ time. Therefore, the overall run time of Algorithm (\ref{alg:Medcpt}) is $O(|C_i| + z^{2.5}+z)$. However, in practice, $z$ is very small compared to $|C_i|$ because it only contains clusters with support equal to 50\%. Thus, Algorithm (\ref{alg:Medcpt}) has, in practice, near-linear run time in $|C_i|$. 
\subsection{Constituent Label Aggregation} 

After obtaining the aggregated structures, we aggregate the corresponding labels provided by the parsers. In constituent label aggregation step, we aim to minimize the objective function Eq.~(\ref{optimization-label}) with respect to the $\mathcal{L}_{Clu{(T_{i}^{S^*}})}$ and $\mathcal{W}^l$, where $\mathcal{L}_{Clu{(T_{i}^{S^*}})}$ refers to the labels associated to the aggregated structure, and $\mathcal{W}^l = \{w_1^l, w_2^l,...,w_p^l\}$ refers to the set of all parsers’ weights with respect to the constituent labels, as follows:
\small{
\begin{align*}
    \min_{\mathcal{T}^*,\mathcal{W}^l} f(\mathcal{T}^*,\mathcal{W}^l)=\sum_{k=1}^{p}{w_k^{l}}\sum_{i=1}^{n}d(\mathcal{L}_{Clu({T_{i}^{S^*}})}, \mathcal{L}_{Clu({T_{ik}^{S}})}), \numberthis
\label{optimization-label}
\end{align*}
}
where $\mathcal{L}_{Clu({T_{ik}^{S})}}$ refers to the constituent labels provided by parsers for the obtained clusters in $T_{i}^{S^*}$. Accordingly, we show the weight update by taking differentiation with respect to $\mathcal{W}^l$ in Eq.~(\ref{Eq:wl}):
\small{
\begin{align*}
  w_k^l= -log(\frac{\sum_{i} d(\mathcal{L}_{Clu({T_{i}^{S^*}})}, \mathcal{L}_{Clu({T_{ik}^{S}})})}{max_{k}\sum_{i} d(\mathcal{L}_{Clu({T_{i}^{S^*}})}, \mathcal{L}_{Clu({T_{ik}^{S}})})}), \numberthis
\label{Eq:wl}
\end{align*}
}
where $d$ refers to the zero-one loss function. Similarly, the constituent label aggregation update is shown in Eq.~(\ref{label update}):
\small
\begin{align*}
      { 
      {T_{i}^{*}}\xleftarrow\
      \operatorname*{argmin}_{\mathcal{L}_{Clu(T_{i}^{S^*})}}}
      \sum_{k=1}^{p}w_{k}^l \sum_{i=1}^{n}
      d(\mathcal{L}_{Clu({T_{i}^{S^*}})}, \mathcal{L}_{Clu({T_{ik}^S)}}). \numberthis
 \label{label update}
 \end{align*}

\section{Experiments}
In this section, we conduct experiments on various datasets with different languages from different domains\footnote{Our implementation code is available at \url{https://github.com/kulkarniadithya/CPTAM}}. We start with the datasets in Section \ref{dataset}.The baseline methods and evaluations are discussed in Sections \ref{Baselines} and \ref{evaluations}, respectively. We demonstrate the main experimental results in Section \ref{experimental : label results} and ablation studies in Section \ref{sec:ablation}. 

\begin{table} 
\resizebox{\columnwidth}{!}{%
\centering
\begin{tabular}{l|c|c| c c c c c|} 
\hline
\textbf{Datasets} & Language & Sentence & \#token/sentence & \\  
\hline
\hline
\textbf{Penn Treebank-3} & English & 49208 & 24.70 & \\ 
\hline
\textbf{OntoNotes} & English & 143709 & 18.59 & \\
                     & Chinese & 51230 & 17.64 & \\
\hline
\textbf{Genia} & English & 18541 & 28.09 \\
\hline
\textbf{TIGER Corpus}  & German & 40020 & 17.06 & \\
\hline
\textbf{French Treebank}    & French & 21550 & 24.80 \\
\hline

\end{tabular}%
    }
\caption{Statistics of Datasets} \label{Table Dataset}
\end{table}
\subsection{Datasets} \label{dataset}
We use six benchmark datasets from different domains and different languages for evaluation. 

\textbf{Penn Treebank-3}\footnote{\url{https://catalog.ldc.upenn.edu/LDC99T42}} selected 2,499 stories from a three-year Wall Street Journal collection in English for syntactic annotation.

\textbf{OntoNotes}\footnote{\url{https://catalog.ldc.upenn.edu/LDC2013T19}} consists of a large corpus comprising various genres of text (e.g., news, weblogs, Usenet newsgroups, broadcast, and talk shows) with structural information in three languages (English, Chinese, and Arabic). The Arabic portion of the dataset is not included in our experiments since the parsers' tokenization does not align with the ground truth.

\textbf{Genia}\footnote{\url{https://github.com/allenai/genia-dependency-trees/tree/master/original_data}} is constructed from research abstracts in the molecular biology domain. Approximately 2500 abstracts are annotated from the MEDLINE database.

\textbf{TIGER Corpus}\footnote{\url{https://www.ims.uni-stuttgart.de/documents/ressourcen/korpora/tiger-corpus/download/start.html}} consists of approximately 40,000 sentences from the German newspaper "Frankfurter Rundschau". The corpus was annotated with part-of-speech and syntactic structures in the project TIGER (DFG).

\textbf{French Treebank}\footnote{\url{http://ftb.llf-paris.fr/telecharger.php?langue=en}} consists of approximately 22000 sentences from the articles of French newspaper "Le Monde". \\
Table \ref{Table Dataset} summarizes the statistics of the datasets. 

\subsection{Baselines} \label{Baselines}
We compare CPTAM with two categories of baselines. The first category of baselines is the individual state-of-the-art input constituency parsers including \textbf{CoreNLP} \cite{manning2014stanford}, \textbf{Berkeley}\footnote{We use the pretrained model provided by spaCy} \cite{kitaev-klein-2018-constituency}, \textbf{AllenNLP\footnote{This parser can parse sentences in English only.}} \cite{gardner-etal-2018-allennlp}, and \textbf{HanLP} \cite{hanlp2}. We have chosen these parsers as they are the most ``stars'' NLP libraries on GitHub, demonstrating their wide applications in industry and academia.

The second category of baselines is the tree aggregation methods\footnote{We apply the implementations from \url{https://evolution.genetics.washington.edu/phylip/getme-new1.html}} including
\begin{itemize}
    \item \textbf{Majority Rule Consensus (MRC)} \cite{MARGUSH1981239}. It constructs aggregation trees containing clusters with support greater than $50\%$.
    \item \textbf{Greedy Consensus (GC)} \cite{bryant2003classification}. The aggregated trees are constructed progressively to have all the clusters whose support is above a threshold ($30\%$ for OntoNotes Chinese,TIGER Corpus, and French Treebank, and $20\%$ for the other datasets) and compatible with the constructed tree. With these thresholds, this baseline essentially constructs aggregation trees with all compatible clusters from input trees.
    \item \textbf{Strict Consensus (SC)} \cite{bryant2003classification}. It constructs aggregation trees containing clusters with support of $100\%$. 
\end{itemize}
These methods only consider the aggregation of tree structures but not labels. Therefore, we apply \textbf{Majority Voting (MV)} to aggregate the labels after the tree aggregation step, where the label with the highest frequency is chosen for each cluster. We also compare with \textbf{CPTAM-W}, which is \textbf{CPTAM} without weight estimation. CPTAM-W considers clusters with support greater than or equal to $50\%$; thus, it is more aggressive compared to MRC, which considers clusters with support greater than $50\%$ only, and more conservative compared to GC, which includes all compatible clusters.

\begin{table*} [!htbp] \small
\centering 
\begin{adjustbox}{width=160mm,center}
\begin{tabular}{l| c c c | c c c | c c c | c c c}
\hline
& \multicolumn{3}{c|}{\text{Penn Treebank-3}}   
& \multicolumn{3}{c|}{\text{OntoNotes (English)}} 
& \multicolumn{3}{c|}{\text{OntoNotes (Chinese)}}
& \multicolumn{3}{c}{\text{Genia}}\\
&  \text{$P$} & \text{$R$} & \text{$F1$} &  \text{$P$} & \text{$R$} & \text{$F1$} &  \text{$P$} & \text{$R$} & \text{$F1$} & \text{$P$} & \text{$R$} & \text{$F1$} \\
\hline 
CoreNLP & 81.35 & 83.16 & 82.25 & 77.68 & 77.66 & 77.67 & 86.58 & 86.77 & 86.67 & 66.38 & 72.46 & 69.29 
\\
Berkeley & 91.13 & 93.27 & 92.19 & 80.31 & 79.04 & 79.67 & 77.66 & 69.93 & 73.59 & 69.49 & 72.82 & 71.12
\\
AllenNLP & 93.08 & \textbf{94.96} & \textbf{94.01} & 80.99 & 80.47 & 80.73 & - & - & - & 68.00 & 69.54 & 68.76
\\
Hanlp & 18.42 & 19.21 & 18.81 & 21.61 & 21.90 & 21.75 & \textbf{92.56} & \textbf{92.63} & \textbf{92.59} & 28.65 & 29.19 & 28.92
\\
\hline
MRC + MV & 89.75 & 89.79 & 89.77 & 78.11 & 73.75 & 75.87 & 89.05 & 89.07 & 89.06 & 68.92 & 69.12 & 69.02
\\
GC + MV & 88.76 & 90.19  & 89.47 & 76.85 & 75.51 & 76.17 & 87.90 & 89.99 & 88.93 &  66.22 & 70.36 & 68.22
\\
SC + MV & 90.46 & 85.14 & 87.72 & 78.95 & 66.72 & 72.32 & 89.79 & 87.78 &  88.77 & 70.82 & 59.15 & 64.46
\\
\hline
\textbf{CPTAM-W} & 90.79 & 90.13 & 90.46 & 78.87 & 74.87 & 76.82 & 89.19 & 89.11 & 89.15 & 70.02 & 69.92 & 69.97
\\

\textbf{CPTAM} & \textbf{93.58} & 93.46 & 93.52 & \textbf{81.94} & \textbf{80.72} &  \textbf{81.33} & 91.55 & 91.47 & 91.51 & \textbf{71.43} & \textbf{72.43} & \textbf{71.93}
\\
\hline
\end{tabular}
\end{adjustbox}
\caption{CPT aggregation performance comparison on Penn Treebank-3, OntoNotes (English, Chinese), and Genia datasets. 
} 
\label{table: Results 1}
\end{table*}

\begin{table*} [!htbp]
\huge
\centering 
\begin{adjustbox}{width=170mm,center}
\begin{tabular}{l| c c| c c | c c| c c | c c |c c | c c| c c | c c | c c}
\hline
& \multicolumn{4}{c|}{\text{Penn Treebank-3}}   
& \multicolumn{4}{c|}{\text{OntoNotes (English)}} 
& \multicolumn{4}{c|}{\text{Genia}}
& \multicolumn{4}{c|}{\text{OntoNotes (Chinese)}}   
& \multicolumn{2}{c|}{\text{French Treebank}} 
& \multicolumn{2}{|c}{\text{TIGER Corpus}} \\

& $F1^S$ &  $w^S$ & $Acc^l$ &  $w^l$ & $F1^S$ & $w^S$ & $Acc^l$ &  $w^l$ & $F1^S$ &  $w^S$ & $Acc^l$ & $w^l$ & $F1^S$ &  $w^S$ & $Acc^l$ &  $w^l$ & $F1^S$ &  $w^S$ & $F1^S$ &  $w^S$ \\
\hline 
CoreNLP & 3 & 3 & 3 & 3 & 3 & 3 & 3 & 3 & 3 & 3 & 1 & 1 & 1 & 1 & 2 & 2 & 1 & 1 & 2 & 2
\\
Berkeley & 2 & 2 & 2 & 2 & 2 & 2 & 2 & 2 & 1 & 1 & 2 & 2 & 3 & 3 & 3 & 3 & 2 & 2 & 1 & 1 
\\ 
Allennlp & 1 & 1 & 1 & 1 & 1 & 1 & 1 & 1 & 2 & 2 & 3 & 3 & - & - & - & - & - & - & - & -
\\ 
HanLP & 4 & 4 & 4 & 4 & 4 & 4 & 4 & 4 & 4 & 4 & 4 & 4 & 2 & 2 & 1 & 1 & 3 & 3 & 3 & 3
\\ 
\hline
\end{tabular}
\end{adjustbox}
\caption{The comparison between the rankings of parsers' performance with the rankings of estimated weights}   
\label{table: Results 4}
\end{table*}

\subsection{Evaluation Measurements} \label{evaluations}
The performance is evaluated by different standard metrics in the experiments. To evaluate the performance based on the real-life usage of constituency parsers, we also include the POS tags of individual tokens as part of the parsing results. 
Therefore, the following evaluation metric is stricter than \textit{Evalb}, the standard metric for evaluating phrase structure. 
We report Precision, Recall, and F1 as follows: 
\small{
\begin{align}
    Precision (P)&=\frac{\# Correct \ Constituents}{\#Constituents \ in \  parser \ output}\\
    Recall (R)&=\frac{\#Correct \  Constituents}{\#Constituents \ in \ gold \ standard}\\
    F1 &= \frac{2*Precision*Recall}{Precision+Recall}.
\end{align}
}
\noindent Accordingly, the same metrics Precision ($P^S$), Recall ($R^S$), and F1 ($F1^S$) are defined to evaluate the performance considering only the tree structure.

\subsection{Experimental Results} \label{experimental : label results}
The experimental results for CPT aggregation performance on Penn Treebank-3, OntoNotes (English and Chinese), and Genia are summarized in Table \ref{table: Results 1}. Our experiments consider the scenario where the users only have access to the parsers but do not have any prior knowledge about their performance. Since there is no prior knowledge about parser performance on different datasets or languages, we consider the freely available state-of-the-art parsers to obtain initial parsing results. For French Treebank and TIGER Corpus datasets, as the ground truth labels are different from the labels provided by the parsers, we do not consider them for constituent label aggregation\footnote{Taking CoreNLP output as an example in TIGER corpus, a chunk of the sentence is tagged as \textit{(NUR (S (NOUN Konzernchefs) (VERB lehnen)} while the ground truth for the same span is \textit{(NN-SB Konzernchefs) (VVFIN-HD lehnen)}}.

Among the input parsers, it is clear that none of them consistently perform the best on all datasets. Specifically, Hanlp performs poorly on English but the best on Chinese. This may be caused since Hanlp targets the Chinese language even though it is software for multiple languages. Allennlp performs the best on Penn Treebank-3 and OntoNotes (English) among the four parsers but does not perform well on the Genia dataset in the biomedical domain.

The proposed CPTAM significantly outperforms all the state-of-the-art aggregation methods in terms of Precision, Recall, and F1 score, demonstrating the power of the proposed method in resolving the conflicts between input CPTs.  
Comparing CPTAM-W and CPTAM, CPTAM further improves in all metrics, indicating the necessity and effectiveness of the weight estimation in the truth discovery process. 

Compared with individual parsers, CPTAM performs consistently well to represent the constituency grammar of the sentence in all datasets. CPTAM performs the best for two out of four datasets and remains competitive on the other two datasets. In contrast, AllenNLP and Hanlp are the best for one out of four datasets, and CoreNLP and Berkeley are not the best in any datasets. This shows that the proposed CPTAM can consistently obtain high-quality aggregated CPTs over different languages and domains. 

Further, we study the accuracy of the weight estimations of CPTAM. We compare the rankings given by the estimated weights with the rankings of parsers' real quality, and the results are shown in Table \ref{table: Results 4}. To evaluate the weights estimated for the structure aggregation, we compute the rank of parsers' quality by their structure F1 scores ($F1^S$) compared with the ground truth and by the weight estimation $w_{k}^S$ computed in Eq.~(\ref{Eq:ws}), where the numbers indicate the rank. Similarly, for the label aggregation, we compute the rank of parsers' quality by their label accuracy ($Acc^l$) and by the weight estimation $w_{k}^l$ computed in Eq.~(\ref{Eq:wl}).

It is clear that the parsers' quality varies across different languages and domains. The ranks of parsers are exactly the same between their real quality and the estimated weights. It illustrates that the weight calculated by the proposed CPTAM properly estimates parsers' quality in the absence of ground truth. These experiments also suggest that parser users can first apply CPTAM on the sampled corpus to estimate the quality of individual parsers on the given corpus and then use the best parser to achieve high-quality parsing results and high efficiency. 


\begin{table*} [!htbp]  \small
\centering 
\begin{adjustbox}{width=160mm,center}
\begin{tabular}{l| c c c c | c c c c | c c c c}
\hline
& \multicolumn{4}{c|}{\text{Penn Treebank-3}}   
& \multicolumn{4}{c|}{\text{OntoNotes (English)}} 
& \multicolumn{4}{c}{\text{Genia}} \\
& \text{$RF dist.$} &  \text{$P^{S}$} & \text{$R^{S}$} & \text{$F1^{S}$} &  \text{$RF dist.$} &  \text{$P^{S}$} & \text{$R^{S}$} & \text{$F1^{S}$} &
\text{$RF dist.$} &  \text{$P^{S}$} & \text{$R^{S}$} & \text{$F1^{S}$}
\\
\hline 
CoreNLP & 436250 & 84.76 & 86.76 & 85.75 & 890401 & 87.58 & 89.03 & 88.30 & 341925 & 77.75 & 85.70 & 81.53
\\
Berkeley & 194282 & 94.40 & 96.16 & 95.27 & 790028 & 89.66 & 89.88 & 89.77 & 273796 & 82.84 & \textbf{87.34} & 85.03
\\
AllenNLP & \textbf{154795} & 95.41 & \textbf{97.29} & \textbf{96.34} & 696054 & 90.55 & \textbf{91.75} & 91.15 & 298548 & 82.23 & 83.90 & 83.06
\\
Hanlp & 1709148 & 59.26 & 60.69 & 59.97 & 3119630 & 64.71 & 66.24 & 65.47 & 521522 & 69.26 & 69.50 & 69.38 
\\
\hline
MRC & 265967 & 93.23 & 92.79 & 93.01 & 1170233  & 87.98  & 83.18 & 85.51 & 307761 & 83.05 & 82.78 & 82.91
\\
GC & 266406 & 91.94 & 93.98 & 92.95 & 1028983 & 85.77 & 85.88 & 85.82 & 316758 & 78.93 & 84.16 & 81.46
\\
SC & 320215 & 93.77 & 87.53 & 90.54 & 1323468 & 88.39 & 75.33 & 81.34 & 370522 & 83.70 & 67.09 & 74.48
\\
\hline
\textbf{CPTAM-W} & 243413 & 93.12 & 93.94 & 93.53 & 1007384 & 87.85 & 84.63 & 86.21 & 293364 & 82.95 & 84.01 & 83.47 \\
\textbf{CPTAM} & 176187 & \textbf{95.92} & 95.73 & 95.82 & \textbf{641058} & \textbf{91.28} & 91.71 & \textbf{91.49} & \textbf{264153} & \textbf{84.31} & 85.96 & \textbf{85.13}      
\\
\hline
\end{tabular}
\end{adjustbox}
\caption{The tree structure aggregation performance comparison on Penn Treebank-3, OntoNotes (English), and Genia datasets}
\label{table: Results 2}
\end{table*}


\begin{table*} [!htbp]  \small
\centering 
\begin{adjustbox}{width=160mm,center}
\begin{tabular}{l| c c c c | c c c c | c c c c}
\hline
& \multicolumn{4}{c|}{\text{OntoNotes (Chinese)}}   
& \multicolumn{4}{c|}{\text{French Treebank}} 
& \multicolumn{4}{c}{\text{TIGER Corpus}} \\

& \text{$RF dist.$} &  \text{$P^{S}$} & \text{$R^{S}$} & \text{$F1^{S}$} &  \text{$RF dist.$} &  \text{$P^{S}$} & \text{$R^{S}$} & \text{$F1^{S}$} &
\text{$RF dist.$} &  \text{$P^{S}$} & \text{$R^{S}$} & \text{$F1^{S}$}
\\
\hline 
CoreNLP & \textbf{108817} & \textbf{96.47} & \textbf{96.72} & \textbf{96.59} & \textbf{212550} & \textbf{91.06} & \textbf{91.56} & \textbf{91.31} & 740059 & 65.67 & 80.02  & 72.14
\\
Berkeley & 406708 & 92.90 & 84.16 & 88.31 & 344070 & 85.65  & 77.07 & 81.13 & \textbf{220183} & \textbf{93.97} & \textbf{85.36} & \textbf{89.46} 
\\ 
HanLP & 144618 & 95.73 & 95.78 & 95.75 & 1196428 & 44.90 & 37.82 & 41.06 & 818677 & 66.22 & 62.36 & 64.23 
\\ 
\hline
MRC & 200008 & 95.41 & 94.04 & 94.72 & 231755 & 90.06 & 88.05 & 89.04 & 229139  & 92.95  & 82.79 & 87.58
\\
GC & 206438 & 93.93 & 95.06 & 94.49 & 308335 & 83.80 & 89.08 & 86.35 & 315613 & 87.66 & 84.38 & 85.99
\\
SC & 213173 & 95.83 & 92.84 & 94.31 & 314548 & 90.97 & 79.30 & 84.74 & 293991 & 93.07 & 80.23 & 86.17
\\
\hline
\textbf{CPTAM-W} & 198769 & 95.34 & 94.20 & 94.77 & 229344 & 90.8 & 88.35 & 89.56 & 228845 & 92.83 & 83.97 & 88.18 \\
\textbf{CPTAM} & 114733 & 96.39 & 96.49 & 96.44 & 212697 & 91.05 & 91.54 & 91.29 & \textbf{220183} & \textbf{93.97} & \textbf{85.36} & \textbf{89.46}\\
\hline
\end{tabular}
\end{adjustbox}
\caption{The tree structure aggregation performance comparison on OntoNotes (Chinese), French Treebank and TIGER Corpus datasets} 
\label{table: Results 3}
\end{table*}


\subsection{Ablation Study}\label{sec:ablation}
To gain insights into our framework, we investigate the effectiveness of the tree structure aggregation step as it is the foundation of CPTAM. 
To evaluate the performance on the structure, the $RF$ distance ($RF dist.$) is calculated between the parser output and ground truth. We also calculate Precision ($P^S$), Recall ($R^S$), and F1 score ($F1^S$) considering the tree structure only. The ablation study results are shown in Table \ref{table: Results 2} and Table \ref{table: Results 3}.

Table \ref{table: Results 2} and Table \ref{table: Results 3} illustrate a strong correlation between the $RF$ distance and F1 score on all datasets. The lower the $RF$ distance, the higher the F1 score. This correlation indicates that $RF$ distance is a proper measurement for the quality of constituency parse trees. CPTAM outperforms all aggregation baseline approaches on all datasets. It consistently identifies proper clusters in the tree by correctly estimating the parsers' quality. As a result, CPTAM outperforms or stays competitive compared to the best parser on all datasets as well.

\section{Conclusion}

This paper adopts the truth discovery idea to aggregate CPTs from different parsers by estimating the parsers' reliability in the absence of ground truth. We aggregate the input CPTs in two steps: tree structure aggregation and constituent label aggregation. The block coordinate descent method is applied to obtain solutions through an iterative process, and an optimal solution is derived to construct aggregation trees that can minimize the weighted $RF$ distance. We further provide theoretical analysis to show that the proposed approach gives the optimal solution. The proposed solution has near-linear run time in practice for the tree structure aggregation step. Our experimental results illustrate that the proposed solution CPTAM outperforms the state-of-the-art aggregation baselines and consistently obtains high-quality aggregated CPTs for various datasets in the absence of ground truth. We further illustrate that our adopted weight update correctly estimates parsers' quality. Empirically, the importance of the tree structure aggregation step is demonstrated in the ablation study. Overall, we present the effectiveness of the proposed CPTAM across different languages and domains.

\section{Acknowledgement}
The work was supported in part by the National Science Foundation under Grant NSF IIS-2007941. Any opinions, findings, and conclusions, or recommendations expressed in this document are those of the author(s) and should not be interpreted as the views of any U.S. Government. The U.S. Government is authorized to reproduce and distribute reprints for Government purposes, notwithstanding any copyright notation hereon.

\bibliography{CPTAM2.bib}
\bibliographystyle{plain}

\end{document}